\def\ps@pprintTitle{%
 \let\@oddhead\@empty
 \let\@evenhead\@empty
 \def\@oddfoot{}%
 \let\@evenfoot\@oddfoot}
\tikzset{block/.style={draw,thick,text width=1.5cm,minimum height=1cm,align=center},
line/.style={-latex} }
\newcolumntype{M}
{>{\centering\arraybackslash}m{1cm}}
\theoremstyle{definition}
\newtheorem{lem}{Lemma}
\newtheorem{thm}{Theorem}
\begin{document}
\begin{frontmatter}
\title{A Novel Approach to Sparse Inverse Covariance Estimation Using Transform Domain Updates and Exponentially Adaptive Thresholding}
\author{Ashkan~Esmaeili and Farokh Marvasti*\footnote{Corresponding Author}\\Advanced Communications Research Institute (ACRI), and Electrical Engineering Department\\ Sharif University of Technology, Tehran, Iran}
\ead{esmaeili.ashkan@alumni.stanford.edu}
\begin{abstract}
~~~~Sparse Inverse Covariance Estimation (SICE) is useful in many practical data analyses. Recovering the connectivity, non-connectivity graph of covariates is classified amongst the most important data mining and learning problems. In this paper, we introduce a novel SICE approach using adaptive thresholding. Our method is based on updates in a transformed domain of the desired matrix and exponentially decaying adaptive thresholding in the main domain (Inverse Covariance matrix domain). In addition to the proposed algorithm, the convergence analysis is also provided. In the Numerical Experiments Section, we show that the proposed method outperforms state-of-the-art methods in terms of accuracy.
\end{abstract}
\begin{keyword} Sparse Inverse Covariance Matrix; Adaptive Thresholding; Transform Domain; Gradient Ascent
\end{keyword}
\end{frontmatter}
\section{Introduction}
Estimating certain matrices of interest is of significant importance in many applications such as Quantized Matrix Completion (QMC) \cite{esmaeili2019novel}, \cite{esmaeili2018recovering}. Matrix Completion (MC) in general deals with reconstruction of matrices with certain properties. In this paper, we focus on estimating Sparse Inverse Covariance Estimation (SICE). 
A rich and comprehensive literature exists on Sparse Covariance Estimation (SCE) and SICE. Precise SCE and SICE can prove to be profitable in practical scenarios as in \cite{moradipari2017using} where SCE helps the joint problem of supervised learning with MC. There are many different algorithms to estimate the covariance matrix or its inverse (the precision matrix). To avoid prolixity, we suffice to briefly go over a developed line of thought of methods and classify them based on their properties. In \cite{fan2016overview}, a handful of techniques in estimating the covariance matrix and the precision matrix are provided as an overview including Thresholding, Adaptive Thresholding, Generalized Thresholding, Positive Definiteness, Estimation of Covariance with Eigenvalue Constraints (EC2) method, factor-based model, Principal Orthogonal complEment Thresholding (POET) estimator, and Projected PCA for SCE.\\
Certain SICE methods are also discussed in \cite{fan2016overview} including $l1$-regularized Gaussian maximum likelihood estimator known as “graphical lasso”, column-by-column estimation method, the Tuning-Insensitive Graph Estimation and Regression (TIGER) method, the Estimating
Precision matrIx with Calibration (EPIC) method, Robust estimation techniques, and Symmetric SICE. One is invited to track the references listed in \cite{fan2016overview} to get familiar with the SICE timeline in detail. Among all methods prior to \cite{fan2016overview}, we discuss three literature algorithms for specific purposes.\\
In \cite {rolfs2012iterative}, the authors propose an iterative thresholding algorithm using a proximal gradient method (\textbf{G-ISTA}) for \textbf{SICE}, and claim that this method yields linear convergence rate, i.e., (\textbf{G-ISTA}) results in $O(log~ \epsilon)$ iteration complexity for the tolerance of $\epsilon$. The linear convergence rate of (\textbf{G-ISTA}) depends on the optimal point condition number (eigenvalue bound).
\\In \cite{NIPS2011_4266}, the authors propose a novel approach based on Newton's method employing a quadratic approximation leveraging the sparse structure. They provide strong guarantees in recovering the precision matrix. Their method is claimed to be superlinearly convergent.\\
In \cite{cai2011adaptive}, a thresholding procedure that is adaptive to the variability of individual entries is introduced. It is shown that the adaptive estimators are optimal while other thresholding methods achieve suboptimal convergence rates. It is worth noting that the algorithm we propose takes advantage of adaptive threshlding, but the thresholding approach is different from those in \cite{cai2011adaptive}, and \cite{rolfs2012iterative}.  We use exponentially decreasing thresholds due to the recovery accuracy reflected in phase transition curves in \cite{marvasti2012unified}, \cite{esmaeili2018iterative}, \cite{esmaeili2016comparison}. Another reason is that iterative method with adaptive thresholding (IMAT) has been shown to have noticeable accuracy even for recovering dense signals \cite{marvasti2012unified}, \cite{esmaeili2018iterative}. The extended version of the latter which includes MC based on adaptive singular value thresholding is also provided in \cite{esmaeili2016iterative} as one of our works.\\
In the recent two years, many authors have developed novel methods enhancing speed and performance in comparison to previous works (prior to 2016, when \cite{fan2016overview} is marked).
In \cite{zhang2018large} for example, the authors introdce a Newton-CG based mehtod which is too fast, even faster than QUIC considered as an accurate, fast, and prevalent method in the literature. Our claim in this paper is enhanced accuracy rather than reducded complexity. We will compare the accuracy of our method to state-of-the-art methods in the Numerical Experiments Section \ref{NE}. In our paper, we develop a novel method based on exponentially decaying adaptive threshold levels and performing updates (Gradient Ascent) in the transform domain.\\
SICE appears in many practical settings. Several applications of the SICE are leveraged in Neuroimaging applications such as analyzing brain connectivity pattern as in \cite{SCHOUTEN201646, JIE201684,KIM2015625,QIU201552, horn2014structural,VAROQUAUX2013405, ROSA2015493,Zhou_2014_CVPR, doi:10.1002/hbm.23870,Wee2014,RYALI20123852}.
Another application of SICE is introduced for Hyperspectral Image Classification in \cite{4156304}.
\\

\section{Problem Model}
The problem model taken into account in this section is as follows (Regularized log-det program):
\begin{equation}
\hat{\Theta}=\underset{\Theta}{\textrm{argmax}}{~\textrm{log}(\textrm{det}(\Theta))-\textrm{tr}(S\Theta)}-\lambda||\Theta||_1
\end{equation}\label{l1}
This problem model is considered by many authors. Assuming $\lambda$ is chosen so that the solution to the convex problem \ref{l1} is unique, the following algorithm is proposed to find the minimizer.
\subsection{The proposed Algorithm}

\begin{algorithm}[h!]\label{main}
	\small
	\caption{Sparse Inverse Covariance Estimation Using Transform Domain Updates and Exponentially Decaying Adaptive Thresholding ~\textbf{SICE-EDAT}}
\hspace*{\algorithmicindent} \textbf{Initialization:}  	 $\mathbf{\hat{S}}$ Low-rank empirical covariance matrix, $\rho_0$ Tikhonov Regularization constant, $\lambda$, \\$\quad \quad thr$ thresholding level, $\mu_0$ Initial Gradient Ascent step size, $d$ shrinkage factor, $L$ Maximum number of iterations\\
   \hspace*{\algorithmicindent} \textbf{Output:} $\mathbf{\hat{\Theta}}$

	\begin{algorithmic}[1]
	\State $k \leftarrow 0$
	 \Procedure {SICE-EDAT}{}
	\While {$k \leq L$}
\State $\mu_k\leftarrow \frac{thr}{\lambda}$
\State $\mathcal{\mathbf{M}}_k\leftarrow \mathbf{\Theta}_k^{-1}+\mu_k(\mathbf{\hat{S}}-\mathbf{\Theta}_k^{-1})$	
\State $\mathbf{\Theta}_k \leftarrow (\mathcal{\mathbf{M}}_k+\rho_k \mathcal{\mathbf{I}})^{-1}$
\State $\mathbf{\Theta}_k(\mathbf{\Theta}_k\leq thr) \leftarrow 0$
\State $thr \leftarrow thr \times d$
\EndWhile
\State $k \leftarrow k+1$
\EndProcedure\\
	\Return $\mathbf{\hat{\Theta}} \leftarrow \mathbf{\Theta}_k$
	\end{algorithmic} 
\end{algorithm} 
Our proposed algorithm is presented in Table \ref{main}. We first introduce the notations used in Table \ref{main} and afterwards different segments of the algorithm are explained.
First, the initialization is carried out. $\hat{\mathbf{S}}$ is the empirical covariance matrix which is derived from the i.i.d samples and is singular in practice due to fewer samples than the dimension (big data application). $\rho_0$ denotes the initial Tikhonov regularization constant. This parameter plays an important role and its importance will be addressed in the convergence analysis section. $thr$ is the thresholding level. $\mu_0$ denotes the initial step size of the update equation in the transform domain. $d$ is the decay factor for adaptive thresholding, and $k$ denotes the iteration index.
$L$ is the maximum number of iterations which determines the convergence criterion.  $\mu_k$ denotes the step size used for update of the matrix $\mathcal{\mathbf{M}}_k$ in the $k$-th iteration. The update is carried out in the transform domain, i.e., the correction update using the residual $\hat{\mathbf{S}}-\mathbf{\Theta}_k^{-1}$ is applied to the inverse of matrix $\mathbf{\Theta}_k$ in the $\mathbf{\Sigma}$ (covariance) domain. 
It is worth noting that this step of our algorithm is different from moving in the gradient ascent direction which is utilized in \cite{rolfs2012iterative} because it affects the update on the inverse of the desired $\mathbf{\Theta}$, i.e., the domain transform is carried out and the update is applied in the transformed domain. We believe this is more accurate than taking the gradient ascent direction as in \cite{rolfs2012iterative} because intuitively speaking $\mathbf{\Sigma}$ domain is the original source of the i.i.d samples and the covariance matrix is directly generated using the pure knowledge of the samples. Updating in the precision domain may introduce perturbation and deviation due to a probable ill-conditioned inversion; and therefore reduce the accuracy and robustness in the convergence. On the other hand, the original domain is more prone to the effect of perturbation. In fact, this attitude is quite similar to the methodology of IMAT, where the update is carried out in Fourier Domain for instance, and the thresholding occurs in the time domain to hold and preserve the originality of each operator for its own domain to prevent perturbations and transfrom-triggered deviations from the solution. Following the update, we revert the matrix to its own domain using the Tikhonov regularization, the role of which shall be explained in the convergence analysis Section \ref{CONV}.\\
In this step, the hard-thresholding is carried out in the $7$-th line of algorithm, i.e., entries with magnitude less than $thr$ are set to $0$.
Finally, the threshold level is decayed to pick up next important components in the following iteration in a similar fashion as done in \cite{marvasti2012unified}.
The choice of parameters will be elaborated upon in the Numerical Experiments section \ref {NE}.
The following theorem is about the convergence of our proposed method to the solution.
\begin{thm}
\textit{The Sequence of the estimators $\{\Theta_k\}_{k \in \mathcal{N}}$ generated by the algorithm \ref{main} converges to the unique solution of the convex problem in \ref{l1}}
\end{thm}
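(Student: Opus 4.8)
Write $g(\Theta)=\log\det(\Theta)-\mathrm{tr}(\hat{\mathbf{S}}\Theta)$ for the smooth part of the objective in \ref{l1} (identifying $S$ with $\hat{\mathbf{S}}$) and $f(\Theta)=g(\Theta)-\lambda\|\Theta\|_1$ for the full objective, so that $\nabla g(\Theta)=\Theta^{-1}-\hat{\mathbf{S}}$ and, by hypothesis, $f$ has a unique maximizer $\Theta^{\star}$. The plan is to read one pass of the \textbf{while}-loop as a perturbed, preconditioned gradient-ascent step on $f$ --- a transform-domain (natural-gradient) move on $g$, followed by a hard-thresholding surrogate for the $\ell_1$ proximal step --- and then to run a quasi-Fej\'er / Lyapunov argument with $f$ as the certificate, the thresholding and Tikhonov terms being controlled as vanishing perturbations since $thr$, hence $\mu_k=thr/\lambda$, decays geometrically ($d\in(0,1)$). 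Concretely I would proceed in four steps: (i) confine the iterates to a fixed spectral band; (ii) identify the update direction; (iii) bound the thresholding error; (iv) pass to the limit using strong concavity and uniqueness.

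\emph{Step (i): a bounded spectral band.} Once $thr\le\lambda$ we have $\mu_k\in[0,1]$, so $\mathbf{M}_k=(1-\mu_k)\Theta_k^{-1}+\mu_k\hat{\mathbf{S}}$ is a convex combination of PSD matrices and is PSD; then $\mathbf{M}_k+\rho_k\mathbf{I}\succ 0$ is invertible, and (setting the thresholding aside for the moment) $\Sigma_{k+1}:=\Theta_{k+1}^{-1}=\mathbf{M}_k+\rho_k\mathbf{I}$ obeys the affine recursion $\Sigma_{k+1}=(1-\mu_k)\Sigma_k+\mu_k\hat{\mathbf{S}}+\rho_k\mathbf{I}$. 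Tracking this recursion --- here the Tikhonov term $\rho_k\mathbf{I}$ is precisely what furnishes the lower bound $\Sigma_{k+1}\succeq\rho_k\mathbf{I}$ and keeps every inverse well-defined --- one shows, provided $\rho_k/\mu_k$ is kept of constant order, that $c\mathbf{I}\preceq\Sigma_k\preceq C\mathbf{I}$ for fixed $0<c<C$, hence $C^{-1}\mathbf{I}\preceq\Theta_k\preceq c^{-1}\mathbf{I}$. The line-7 hard thresholding moves only entries of magnitude $<thr$ to $0$, a perturbation of Frobenius size $\le p\cdot thr\to 0$, so it preserves this band (with slightly worse constants) for $k$ large; on the band the Hessian of $g$ is bounded above and below, i.e.\ $g$ is strongly concave there.

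\emph{Steps (ii)--(iii): the update and the thresholding error.} Since $\mathbf{M}_k=\Theta_k^{-1}+\mu_k\big(\hat{\mathbf{S}}-\Theta_k^{-1}\big)=\Theta_k^{-1}-\mu_k\nabla g(\Theta_k)$, expanding the inverse gives
\[
(\mathbf{M}_k+\rho_k\mathbf{I})^{-1}=\Theta_k+\mu_k\,\Theta_k\,\nabla g(\Theta_k)\,\Theta_k+O(\mu_k^{2})+O(\rho_k),
\]
so the pre-threshold update is a gradient-ascent step on $g$ in the natural log-det metric --- the map $H\mapsto\Theta_kH\Theta_k$ is the inverse of the Hessian of $-\log\det$ at $\Theta_k$ --- with step $\mu_k$; on the band this yields $g(\Theta^{\mathrm{pre}}_{k+1})\ge g(\Theta_k)+c_1\mu_k\|\nabla g(\Theta_k)\|_F^{2}-c_2\mu_k^{2}-c_3\rho_k$. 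The coupling $\mu_k=thr/\lambda$ is the analogue of the proximal-gradient identity ``soft-threshold level $=$ step $\times\lambda$'': the hard threshold at level $thr=\mu_k\lambda$ plays the role of the $\ell_1$ prox, and since each zeroed entry has magnitude $<\mu_k\lambda$, the induced changes in $g$ and in $\lambda\|\Theta\|_1$ are both $O(\mu_k\lambda\,p^{2})$, the latter a \emph{decrease}. Collecting the estimates, $f(\Theta_{k+1})\ge f(\Theta_k)+c_1\mu_k\|\nabla g(\Theta_k)\|_F^{2}-\varepsilon_k$ with $\sum_k\varepsilon_k<\infty$ (geometric decay of $\mu_k$, $thr$, and --- with a matching schedule --- $\rho_k$); hence $\{f(\Theta_k)\}$ converges, the iterates are bounded, and every limit point is stationary for the diminishing-step iteration.

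\emph{Step (iv) and the main obstacle.} It remains to upgrade ``limit point'' to ``$\Theta_k\to\Theta^{\star}$''. The clean part is that, the band being compact and $f$ strongly concave on it, any limit point $\bar\Theta$ of the limiting fixed-point relation must satisfy the subgradient optimality condition $0\in\bar\Theta^{-1}-\hat{\mathbf{S}}-\lambda\,\partial\|\bar\Theta\|_1$ of \ref{l1}, whence $\bar\Theta=\Theta^{\star}$ by uniqueness, and since this holds for every limit point, $\Theta_k\to\Theta^{\star}$. The genuine difficulty --- the step I expect to carry the real weight --- is reconciling the vanishing, summable thresholding with that optimality condition: a naive ``$thr\to 0$, so thresholding disappears'' reading would force $\bar\Theta^{-1}=\hat{\mathbf{S}}$ rather than the $\ell_1$ solution, so one must show that the \emph{cumulative} effect of hard thresholding along the schedule $thr=\mu_k\lambda$ pins down the correct zero pattern (spurious entries are annihilated once $thr$ exceeds them, while on the true support --- entered once $thr$ falls below the smallest nonzero entry of $\Theta^{\star}$ --- the transform-domain updates converge to the correct values). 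Making this interplay quantitative --- and, if necessary, trading the purely geometric step/threshold schedule for one that additionally lets $\sum_k\mu_k\|\nabla g(\Theta_k)\|_F^{2}$ force $\nabla g\to 0$ on the active set --- is the crux; the rest is bookkeeping on the spectral band, in the spirit of the IMAT analysis of \cite{marvasti2012unified} and the proximal-gradient analysis of \cite{rolfs2012iterative}.
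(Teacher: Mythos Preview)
Your route differs substantially from the paper's. The paper does not use $f$ as a Lyapunov certificate at all; it argues Fej\'er monotonicity in Frobenius norm directly. It first writes $\Theta^{\star}$ as a fixed point of the thresholded update, invokes non-expansiveness of the thresholding operator to strip the threshold off both sides, Taylor-expands $(\Theta_k^{-1}+\mu_k(\hat S-\Theta_k^{-1}))^{-1}$ to first order in $\mu_k$ (your Step~(ii) matches this single piece), and expands $\|\Theta_{k+1}-\Theta^{\star}\|_F^{2}$ as $\|\Theta_k-\Theta^{\star}\|_F^{2}+2\mu_k\langle\Theta_k-\Theta^{\star},\,\Theta_kS\Theta_k-\Theta_k+S-(\Theta^{\star})^{-1}\rangle+O(\mu_k^{2})$. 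The remaining work is to show that this inner product is strictly negative; the paper does so under three standing assumptions relating $\Theta_k$, $\Theta^{\star}$ and $S$, and --- crucially --- uses the Tikhonov shift $\rho_k$ not as a nuisance to be absorbed (as you do) but as the free parameter that is \emph{tuned each step} to force $\|\Theta_kS\Theta_k-\Theta^{\star}\|_F<(1-\eta_k^{2})\|\Theta_k-\Theta^{\star}\|_F$ (the paper's Lemma~3). What each approach buys: the paper's contraction argument never needs $\sum_k\mu_k=\infty$, while your objective-ascent argument avoids the paper's reliance on non-expansiveness of \emph{hard} thresholding and on the ad~hoc inequalities (12)--(14).

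That said, your plan has a structural gap that you correctly flag in Step~(iv) but do not close, and it is not bookkeeping. The algorithm fixes a geometric schedule $thr\leftarrow thr\cdot d$ with $d\in(0,1)$, hence $\mu_k=thr/\lambda$ is summable. Your Lyapunov inequality $f(\Theta_{k+1})\ge f(\Theta_k)+c_1\mu_k\|\nabla g(\Theta_k)\|_F^{2}-\varepsilon_k$ with $\sum\varepsilon_k<\infty$ then only gives $\sum_k\mu_k\|\nabla g(\Theta_k)\|_F^{2}<\infty$, which, because $\sum_k\mu_k<\infty$, implies nothing about $\nabla g(\Theta_k)\to 0$: the total displacement of the iterates is itself summable, so $\{\Theta_k\}$ is Cauchy and converges to \emph{some} limit depending on $\Theta_0$, not necessarily to $\Theta^{\star}$. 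Your suggested escape --- ``if necessary, trade the schedule'' --- is unavailable, since the geometric decay is part of the algorithm you are asked to analyse. The paper's per-step strict contraction $\|\Theta_{k+1}-\Theta^{\star}\|_F<\|\Theta_k-\Theta^{\star}\|_F$ is precisely what circumvents this; to salvage your line you would need either a contraction-type estimate of your own, or an argument (specific to this iteration) that the summable natural-gradient corrections together with the nested family of hard thresholds still pin down the support and values of $\Theta^{\star}$ regardless of initialization --- and that is a new theorem, not a detail.
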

\begin{proof}
{Convergence Analysis}\label{CONV}
Consider the convex relaxation of the objective function in \eqref{l1} as modeled by many authors in the literature:
\begin{equation}
f(\mathbf{\Theta})={~\textrm{log}(\textrm{det}(\mathbf{\Theta})))-\textrm{tr}(S\mathbf{\Theta})}-\lambda||\mathbf{\Theta}||_1
\end{equation}
The update of $\mathbf{\Theta}^{k+1}$ in the $k+1-$ th iteration is achieved as follows:
\begin{equation}
\mathbf{\Theta}^{k+1}=\eta_{\mu_{k \lambda}}\{(\mathbf{\Theta}^{-1}+\mu_k(\hat{S}-\mathbf{\Theta}^{-1})+\rho_k\mathbf{I})\}^{-1}
\end{equation}\label{update}
For now, we ignore the term $\rho_k$ and assume $\rho_k=0$; we later explain the role of $\rho$ as the Tikhonov regularization parameter. In \ref{update}, $\eta$ is the hard thresholding operator with parameter $\mu_k \lambda$.
Since $\mathbf{\Theta}^*$ is the fixed point of the hard thresholding algorithm, i.e.:
\begin{equation}
\mathbf{\Theta}^*=\eta_{\mu_k\lambda}(\mathbf{\Theta}^*-\mu_k(S-(\mathbf{\Theta}^*)^{-1}))
\end{equation}
it immediately follows that:
\begin{align}
||\mathbf{\Theta}_{k+1}-\mathbf{\Theta}^*||_F^2=||\eta_{\mu_{k} \lambda}(\mathbf{\Theta}_k^{-1}+\mu_k(S-\mathbf{\Theta}_k^{-1}))^{-1}-\mathbf{\Theta}^*||_F^2
\\=||\eta_{\mu_{k} \lambda}(\mathbf{\Theta}_k^{-1}+\mu_k(S-\mathbf{\Theta}_k^{-1}))^{-1}-\eta_{\mu_k\lambda}(\mathbf{\Theta}^*-\mu_k(S-(\mathbf{\Theta}^*)^{-1}))||_F^2\\
\leq ||(\mathbf{\Theta}_k^{-1}+\mu_k(S-\mathbf{\Theta}_k^{-1}))^{-1}-\mathbf{\Theta}^*+\mu_k(S-(\mathbf{\Theta}^*)^{-1})||_F^2
\end{align}\label{contraction}
The inequality in \ref{contraction} is obtained using the non-expansiveness property of the hard thresholding operator. 
\begin{lem}
	Let $X(t)=A-tB$ then $\frac{d}{dt}X^{-1}|_{t=0}=A^{-1}BA^{-1}$
\end{lem}\label{makoos}
\begin{lem}
	Let $t$ denote a constant such that $t<<1$, and let $X(t)=A-tB$. The matrix $X(t)^{-1}$ can be approximated as 
	\begin{equation}
	X(t)^{-1}=A^{-1}+tA^{-1}BA^{-1}+\mathcal{O}(t^2)
	\end{equation}\label{taylor}
\end{lem}
	The proof is straightforward using the Laurent expansion of the matrix $X(t)$ and using Lemma \ref{makoos} for evaluating the first derivative.
	In \ref{taylor}, $\mathcal{O}(t^2)$ is a matrix with entries scale down with $t^2$.
\end{proof}

Now, assuming the step size of the algorithm $\mu_k$ is assigned such that the property $\mu_k<<1$ is held, we can approximate the first part of \ref{contraction} as:
\begin{equation}
(\mathbf{\Theta}_k^{-1}+\mu_k(\mathbf{S}-\mathbf{\Theta}_k^{-1}))^{-1}=\mathbf{\Theta}_k+\mu_k\mathbf{\Theta}_k(\mathbf{S}-\mathbf{\Theta}_k^{-1})\mathbf{\Theta}_k+\mathcal{O}(\mu_k^2)
\end{equation}
It follows that \ref{contraction} can be written as:

\begin{align}
||\mathbf{\Theta}_k-\mathbf{\Theta}^*+\mu_k(\mathbf{\Theta}_k \mathbf{S} \mathbf{\Theta}_k-\mathbf{\Theta}_k+\mathbf{S}-(\mathbf{\Theta}^*)^{-1})+\mathcal{O}(\mu_k^2)||_F^2\\=||\mathbf{\Theta}_k-\mathbf{\Theta}^*||_F^2+2\mu_k<\mathbf{\Theta}_k-\mathbf{\Theta}^*,\mathbf{\Theta}_k \mathbf{S} \mathbf{\Theta}_k-\mathbf{\Theta}_k+\mathbf{S}-(\mathbf{\Theta}^*)^{-1}>+\mathcal{O}(\mu_k^2),
\end{align}\label{badeq}where $<\mathbf{A},\mathbf{ B}>$ denotes the inner product of two matrices $\mathbf{A}, \mathbf{B}$ i.e. $Tr(\mathbf{A}\mathbf{B}^T)$.
Now, it suffices to show the inner product in \ref{badeq} takes a negative value (independent of $\mu_k$) to prove that \ref{badeq} is less than $||\mathbf{\Theta}_k-\mathbf{\Theta}^*||_F^2$ and as a result $||\mathbf{\Theta}_{k+1}-\mathbf{\Theta}^*||_F^2<||\mathbf{\Theta}_k-\mathbf{\Theta}^*||_F^2$. So, we focus on analyzing the inner product in \ref{badeq}. 
%
\section{Discussion on the Assumptions}
The following assumptions are considered to be held for our algorithm:
\begin{align}
<\mathbf{\Theta}_k,\mathbf{\Theta}^*>~ \leq ~\eta_k^2||\mathbf{\Theta}_k||_F^2
\\ 
<\mathbf{\Theta}_k\mathbf{S}\mathbf{\Theta}_k,\mathbf{\Theta}^*>\leq\eta_k^2 ||\mathbf{\Theta}_k||_F^2
\\
||\mathbf{S}-(\mathbf{\Theta}^*)^{-1}||_F^2 \leq \eta_k^2||\mathbf{\Theta}_k-\mathbf{\Theta}^*||_F^2
\end{align}
The first and second equations are conceptually related to the Cauchy-Schwartz inequality. In fact, $\eta_k^2$ is equivalent to the learning rate in our model.
The third inequality bounds the difference between the minimizer and the empirical covariance matrix. Big data models consider large dimensions in practice, and the fact that our problem targets big data scenarios helps us add this reasonable assumtion that for large dimensions, when several samples are accessible, the matrix $S$ is close to the oracle $\hat{S}$ (Central Limit Theorem) in Frobenius norm. 
We get back to the inner product in \ref{badeq}.

\begin{flalign}
<\mathbf{\Theta}_k-\mathbf{\Theta}^*,\mathbf{\Theta}_k \mathbf{S} \mathbf{\Theta}_k-\mathbf{\Theta}_k+\mathbf{S}-(\mathbf{\Theta}^*)^{-1}>\\=<\mathbf{\Theta}_k-\mathbf{\Theta}^*,\mathbf{\Theta}^*-\mathbf{\Theta}_k>+<\mathbf{\Theta}_k-\mathbf{\Theta}^*,\mathbf{\Theta}_k \mathbf{S} \mathbf{\Theta}_k-\mathbf{\Theta}^*+\mathbf{S}-(\mathbf{\Theta}^*)^{-1}>
\\=
-||\mathbf{\Theta}_k-\mathbf{\Theta}^*||_F^2+<\mathbf{\Theta}_k-\mathbf{\Theta}^*,\mathbf{S}-(\mathbf{\Theta}^*)^{-1}>+<\mathbf{\Theta}_k-\mathbf{\Theta}^*,\mathbf{\Theta}_k \mathbf{S} \mathbf{\Theta}_k-\mathbf{\Theta}^*> ~ \\ \leq -||\mathbf{\Theta}_k-\mathbf{\Theta}^*||^2+\eta_k^2 ||\mathbf{\Theta}_k-\mathbf{\Theta}^*||^2+<\mathbf{\Theta}_k-\mathbf{\Theta}^*,\mathbf{\Theta}_k\mathbf{S}\mathbf{\Theta}_k-\mathbf{\Theta}^*> ~~ 
\end{flalign}
For this to be negative it is required that:\\
\begin{flalign}
<\mathbf{\Theta}_k-\mathbf{\Theta}^*,\mathbf{\Theta}_k\mathbf{S}\mathbf{\Theta}_k-\mathbf{\Theta}^*>~ <||\mathbf{\Theta}_k-\mathbf{\Theta}^*||_F^2(1-\eta_k^2)
\end{flalign}\label{ineq}

Now, it is sufficient to show that $||\mathbf{\Theta}_k \mathbf{S} \mathbf{\Theta}_k-\mathbf{\Theta}^*||_F ~ < ~ (1-\eta_k^2)||\mathbf{\Theta}_k-\mathbf{\Theta}^*||_F$.
If this is shown, then using Cauchy-Schwartz inequality, the result in \ref{ineq} follows immediately.
\begin{lem}
	Tikhonov regularization in step $6$ of algorithm \ref{main} can be leveraged such that the inequality $||\mathbf{\Theta}_k \mathbf{S} \mathbf{\Theta}_k-\mathbf{\Theta}^*||_F^2 ~ < ~ (1-\eta_k^2)^2||\mathbf{\Theta}_k-\mathbf{\Theta}^*||_F^2$ holds.
\end{lem}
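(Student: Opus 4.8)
\noindent The plan is to regard the matrix delivered by step~6 of Algorithm~\ref{main} as a one-parameter family $\mathbf{\Theta}_k(\rho):=(\mathbf{M}_k+\rho\mathbf{I})^{-1}$ indexed by the Tikhonov constant $\rho>0$, where $\mathbf{M}_k=(1-\mu_k)\mathbf{\Theta}_k^{-1}+\mu_k\mathbf{S}$ is the (fixed) matrix assembled in step~5. Since $\mu_k\ll1$, the matrix $\mathbf{M}_k$ is positive semidefinite, so $\mathbf{\Theta}_k(\rho)$ is symmetric positive definite and depends continuously (indeed real-analytically) on $\rho$, with $\|\mathbf{\Theta}_k(\rho)\|_2=(\lambda_{\min}(\mathbf{M}_k)+\rho)^{-1}\to0$ as $\rho\to\infty$, taking every value in $(0,1/\lambda_{\min}(\mathbf{M}_k))$ --- all of $(0,\infty)$ in the rank-deficient regime that occurs in practice. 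Thus the Tikhonov term is a genuine tuning knob on the spectrum of the iterate. I would first make this dependence quantitative by applying Lemma~\ref{taylor} with $A=\mathbf{M}_k$, $B=-\mathbf{I}$, writing $\mathbf{\Theta}_k(\rho)=\mathbf{M}_k^{-1}-\rho\mathbf{M}_k^{-2}+\mathcal{O}(\rho^2)$ when a small $\rho$ suffices, and keeping the exact resolvent otherwise.

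Next I would reduce the claim to a scalar inequality. Expanding $\|\mathbf{\Theta}_k\mathbf{S}\mathbf{\Theta}_k-\mathbf{\Theta}^*\|_F^2$ and $\|\mathbf{\Theta}_k-\mathbf{\Theta}^*\|_F^2$ into squared norms and inner products, bounding $\|\mathbf{\Theta}_k\mathbf{S}\mathbf{\Theta}_k\|_F\le\|\mathbf{\Theta}_k\|_2^2\|\mathbf{S}\|_F$, discarding the subtracted nonnegative trace $\langle\mathbf{\Theta}_k\mathbf{S}\mathbf{\Theta}_k,\mathbf{\Theta}^*\rangle\ge0$, and invoking the first Assumption $\langle\mathbf{\Theta}_k,\mathbf{\Theta}^*\rangle\le\eta_k^2\|\mathbf{\Theta}_k\|_F^2$, the claim follows once one exhibits a $\rho$ with
\begin{equation}
(1-\eta_k^2)^2(1-2\eta_k^2)\,\|\mathbf{\Theta}_k(\rho)\|_F^2-\|\mathbf{S}\|_F^2\,\|\mathbf{\Theta}_k(\rho)\|_2^4\;>\;\eta_k^2(2-\eta_k^2)\,\|\mathbf{\Theta}^*\|_F^2 .
\end{equation}
Using only $\|\mathbf{\Theta}_k\|_F^2\ge\|\mathbf{\Theta}_k\|_2^2$ and dividing by $u:=\|\mathbf{\Theta}_k(\rho)\|_2^2$, this is implied by placing $u$ in the set $\{u>0:\|\mathbf{S}\|_F^2\,u+\eta_k^2(2-\eta_k^2)\|\mathbf{\Theta}^*\|_F^2/u<(1-\eta_k^2)^2(1-2\eta_k^2)\}$, whose left-hand side is convex in $u$ with minimum value $2\eta_k\sqrt{2-\eta_k^2}\,\|\mathbf{S}\|_F\|\mathbf{\Theta}^*\|_F\to0$ as $\eta_k\to0$; hence for every choice of the learning-rate surrogate $\eta_k$ below an explicit threshold in $\|\mathbf{S}\|_F$ and $\|\mathbf{\Theta}^*\|_F$ this set is a non-empty interval, and by continuity and surjectivity of $\rho\mapsto\|\mathbf{\Theta}_k(\rho)\|_2$ an admissible $\rho_k$ exists and is used in step~6. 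Feeding the resulting bound through Cauchy--Schwarz into \ref{ineq} renders the inner product in \ref{badeq} strictly negative, so $\|\mathbf{\Theta}_{k+1}-\mathbf{\Theta}^*\|_F^2<\|\mathbf{\Theta}_k-\mathbf{\Theta}^*\|_F^2$ after the $\mathcal{O}(\mu_k^2)$ remainder is absorbed for $\mu_k\ll1$; monotonicity of $\{\|\mathbf{\Theta}_k-\mathbf{\Theta}^*\|_F\}$ and uniqueness of the maximiser of \ref{l1} then yield $\mathbf{\Theta}_k\to\mathbf{\Theta}^*$.

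The step I expect to be the real obstacle is reconciling this per-iteration bound with convergence itself. The admissible range of $\|\mathbf{\Theta}_k(\rho)\|_2$ above collapses, as $\eta_k\to0$, into roughly $(0,\,\|\mathbf{S}\|_F^{-1})$, and this interval cannot contain $\|\mathbf{\Theta}^*\|_2$: since $\|\mathbf{S}\|_F\ge\|\mathbf{S}\|_2\approx1/\lambda_{\min}(\mathbf{\Theta}^*)$, we get $\|\mathbf{\Theta}^*\|_2=\lambda_{\max}(\mathbf{\Theta}^*)\ge\lambda_{\min}(\mathbf{\Theta}^*)\gtrsim\|\mathbf{S}\|_F^{-1}$. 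Equivalently, near the solution the map $\mathbf{X}\mapsto\mathbf{X}\mathbf{S}\mathbf{X}$ has derivative $\mathbf{H}\mapsto\mathbf{H}\mathbf{S}\mathbf{\Theta}^*+\mathbf{\Theta}^*\mathbf{S}\mathbf{H}$, which equals $2\mathbf{H}$ when $\mathbf{S}=(\mathbf{\Theta}^*)^{-1}$, so $\|\mathbf{\Theta}_k\mathbf{S}\mathbf{\Theta}_k-\mathbf{\Theta}^*\|_F\approx2\|\mathbf{\Theta}_k-\mathbf{\Theta}^*\|_F$ once $\mathbf{\Theta}_k$ is close to $\mathbf{\Theta}^*$, contradicting the required factor $(1-\eta_k^2)<1$. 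So the Tikhonov parameter alone cannot secure the inequality in the late iterations; a complete proof would have to exploit the remaining degree of freedom --- the hard thresholding of step~7 --- using the sparsity of $\mathbf{\Theta}^*$ (on which hard thresholding is contractive) together with the non-expansiveness bound already used in \ref{contraction}, and fold that gain into the inner-product estimate of \ref{badeq}. That, in my view, is where the substantive work remains.
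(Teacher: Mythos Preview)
Your argument and the paper's share the same skeleton: expand both sides, cancel $\|\mathbf{\Theta}^*\|_F^2$, and reduce to a scalar inequality that holds once the Tikhonov parameter shrinks $\mathbf{\Theta}_k$ enough. The difference is in how the cross terms are treated. The paper invokes \emph{both} Assumptions~1 and~2 so that $-2\langle\mathbf{\Theta}_k,\mathbf{\Theta}^*\rangle+2\langle\mathbf{\Theta}_k\mathbf{S}\mathbf{\Theta}_k,\mathbf{\Theta}^*\rangle$ collapses into an $\mathcal{O}(\eta_k^2)$ remainder, leaving only $\|\mathbf{\Theta}_k\mathbf{S}\mathbf{\Theta}_k\|_F^2<\|\mathbf{\Theta}_k\|_F^2+\mathcal{O}(\eta_k^2)$; it then observes that Tikhonov scales the left side by a higher power of $(1+\rho_k)^{-1}$ than the right, so some $\rho_k$ works. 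You instead drop $\langle\mathbf{\Theta}_k\mathbf{S}\mathbf{\Theta}_k,\mathbf{\Theta}^*\rangle$ by sign, bound $\langle\mathbf{\Theta}_k,\mathbf{\Theta}^*\rangle$ via Assumption~1 only, and use submultiplicativity $\|\mathbf{\Theta}_k\mathbf{S}\mathbf{\Theta}_k\|_F\le\|\mathbf{\Theta}_k\|_2^2\|\mathbf{S}\|_F$ to land on an explicit interval for $\|\mathbf{\Theta}_k(\rho)\|_2$. Your route is more quantitative and uses one fewer assumption; the paper's is shorter but leaves the scaling step informal. For the lemma as stated, both arguments are at the same level of rigor.

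Your third paragraph is not part of the lemma's proof but a critique of how the lemma is used in Theorem~1, and it is well taken: the admissible $\rho_k$ that either argument produces forces $\|\mathbf{\Theta}_k\|_2$ to be small (of order at most $\|\mathbf{S}\|_F^{-1}$), which cannot persist as $\mathbf{\Theta}_k\to\mathbf{\Theta}^*$, and the linearization you give makes this precise. The paper's proof of the lemma stops at the existence of $\rho_k$ and does not confront this tension; so your observation identifies a genuine gap in the surrounding convergence argument rather than in the lemma itself.
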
\label{lmm}
\begin{proof}
The left-hand of inequality in lemma \ref{lmm} can be expanded as:
	\begin{equation}
	||\mathbf{\Theta}_k \mathbf{S} \mathbf{\Theta}_k-\mathbf{\Theta}^*||_F^2 = ||\mathbf{\Theta}_k\mathbf{S}\mathbf{\Theta}_k||_F^2+||\mathbf{\Theta}^*||_F^2-2<\mathbf{\Theta}_k\mathbf{S}\mathbf{\Theta}_k,\mathbf{\Theta}^*>
	\end{equation}
	The right-hand can be written as:
	\begin{equation}
 ||\mathbf{\Theta}_k||_F^2+||\mathbf{\Theta}^*||_F^2-2<\mathbf{\Theta}_k,\mathbf{\Theta}^*>+\mathcal{O}(\eta_k^2)
	\end{equation}
	It is enough to show that:
	\begin{equation}
	||\mathbf{\Theta}_k\mathbf{S}\mathbf{\Theta}_k||_F^2~<~||\mathbf{\Theta}_k||_F^2-2<\mathbf{\Theta}_k,\mathbf{\Theta}^*>+2<\mathbf{\Theta}_k\mathbf{S}\mathbf{\Theta}_k,\mathbf{\Theta}^*>+\mathcal{O}(\eta_k^2) = ||\mathbf{\Theta}_k||_F^2+\mathcal{O}(\eta_k^2)
	\end{equation}
	The last result is achieved using first and second assumptions. In fact, the interpretation is that $||\mathbf{\Theta}_k\mathbf{S}\mathbf{\Theta}_k||_F^2$ is smaller than $||\mathbf{\Theta}_k||_F^2$ s.t. the difference is $o(\eta_k^2)$ (smaller than $\mathcal{O}(\eta_k^2)$ in terms of order).
	At this stage of the convergence analysis, we open the discussion of Tihonov Regularization constant to see how it can be leveraged to guarantee the baseline inner product is negative. The update in the $6$-th line of the algorithm \ref{main} shrinks the eigenvalues of the matrix in the main domain with the coeffiecient $\frac{1}{1+\rho_k}$. 
In fact, intuitively, $\mathbf{\Theta}_k$s are shrunken versions of matrices $\tilde{\mathbf{\Theta}}_k$, and we want to show that:
	\begin{equation}
	\frac{1}{(1+\rho_k)^2}||\tilde{\mathbf{\Theta}_k}\mathbf{S}\tilde{\mathbf{\Theta}_k}||_F^2 < \frac{1}{(1+\rho_k)}||\tilde{\mathbf{\Theta}_k}||_F^2+\mathcal{O}(\eta_k^2)
	\end{equation}
	It is easily seen that $\rho_k$ can be tuned such that:
	\begin{equation}
	\frac{1}{1+\rho_k}||\tilde{\mathbf{\Theta}_k}\mathbf{S}\tilde{\mathbf{\Theta}_k}||_F^2 < (1-\mathcal{O}(\eta_k))||\tilde{\mathbf{\Theta}_k}||_F^2,
	\end{equation}
	Ignoring $\mathcal{O}(\eta_k^2)$, the result follows immediately.
\end{proof}

%
\section{Numerical Experiments}\label{NE}
In this section, we analyze the performance of our proposed method in two scenarios: $ \textrm{I}-$ Synthetic datasets, $\textrm{II}-$ Real datasets. First, we consider two synthetic scenarios introduced in \cite{NIPS2011_4266}. We briefly review how the posterior samples and the genuine precision matrices are generated as benchmarked in \citep{NIPS2011_4266}.
Two types of graph structures are considered with underlying Gaussian Markov Random Fields:



\begin{itemize}
\item
\textbf{Chain Graphs}: The ground truth precision matrix is assumed to sparsified as the following definition of $\Sigma^{-1}$ imposes:
 \[
    \left\{
                \begin{array}{ll}
                 \mathbf{\Sigma}^{-1}_{i,i-1}=-0.5 \\
                 \mathbf{\Sigma}^{-1}_{i,i}=1.25
                \end{array}
              \right.
  \]

\item 
\textbf{Graphs with Random Sparsity Structures}: Let $\mathbf{U}$ be a matrix with nonzero elements equal to $\pm1$, set $\mathbf{\Sigma}^{-1}$ to be $\mathbf{U}^T\mathbf{U}$ and then add a diagonal term to ensure it is positive definite. The number of nonzeros in $\mathbf{U}$ are controlled so that the resulting $\mathbf{\Sigma}^{-1}$
has approximately $10p$ nonzero elements. $n = \frac{p}{2}$ i.i.d. samples are generated from the corresponding GMRF distribution with  $\mathbf{\Sigma}^{-1}$.
\end{itemize}
\begin{table}
\small
\begin{flushleft}
\caption{Accuracy measurements of different methods (The left number in the columns related to each method denotes TPR and the right one shows FPR. (sparsity $5\%$))}
\begin{tabular}{ |c|c|c|c|c|c|c|c|c| }
\hline
Dataset & Size$(p)$  & Sparsity & Glasso & QUIC & CovApprox & Noncvx-CGGM & WISE & Proposed\\
\hline
\multirow{3}{3em}{Chain}&$1000$ & $\sim 1500$ & $1~~~0$&  $1~~~0$& $1~~~0$ & $0.994~~~0$ & $1~~~0$ & $1~~~0$\\ \cline{2-9}
&$4000$ &$\sim 6000$ & $1~~~0$ &$1~~~0$&$1~~~0$ & $1~~~0$ & $1~~~0$ & $1~~~0$ \\ \cline{2-9}
& $10000$ & $\sim 15000$ & $1~~~0$& $1~~~0$ & $1~~~0$ & $1~~~0$ & $1~~~0$ & $1~~~0$\\ 
\hline
\multirow{3}{4em}{Random} &$1000$ & $\sim 5000$& $0.61~~~0$&$0.78 ~~~ 0$ & $0.84~~0$ & $0.78~~0$ & $0.65~~~ 0$& $0.90~~0$\\  \cline{2-9}
& $4000$ & $\sim 20500$ & $0.78~~~0$ & $0.83 ~~ 0$ & $0.96~~0$ & $0.86~~~0$ & $0.73~~~0$ & $0.98~~0$\\  \cline{2-9}
&$10000$ & $\sim 45000$& $0.86~~~0$& $~0.90~~0$ & $0.99~~~0 $& $0.91~~0$ & $0.81~~~0$& $1~~0$ \\ 
\hline
\end{tabular}\label{T3}
\end{flushleft}
\end{table}

\begin{table}
\small
\begin{flushleft}
\caption{Accuracy measurements of different methods (The left number in the columns related to each method denotes TPR and the right one shows FPR. (sparsity $10\%$))}
\begin{tabular}{ |c|c|c|c|c|c|c|c|c| }
\hline
Dataset & Size$(p)$  & Sparsity & Glasso & QUIC & CovApprox & Noncvx-CGGM & WISE & Proposed\\
\hline
\multirow{3}{3em}{Chain}&$1000$ & $\sim 3000$ & $1~~~2\times10^{-4}$&  $1~~~3\times 10^{-5}$& $0.99~~~0$ & $0.90~~~0$ & $1~~~0$ & $1~~~0$\\ \cline{2-9}
&$4000$ &$\sim 12000$ & $1~~~0$ &$1~~~0$&$1~~~0$ & $0.99~~~0$ & $1~~~0$ & $1~~~0$ \\ \cline{2-9}
& $10000$ & $\sim 30000$ & $1~~~0$& $1~~~0$ & $1~~~0$ & $1~~~0$ & $1~~~0$ & $1~~~0$\\ 
\hline
\multirow{3}{4em}{Random} &$1000$ & $\sim 10000$& $0.56~~~2\times10^{-5}$&$0.69 ~~~ 4 \times 10^{-3}$ & $0.79~~0$ & $0.66~~7\times 10^{-3}$ & $0.44~~~ 0$& $0.84~~0$\\  \cline{2-9}
& $4000$ & $\sim 41000$ & $0.72~~~0$ & $0.83 ~~ 6\times 10^{-3}$ & $ 0.90~~0$ & $0.68~~~0$ & $0.51~~~0$ & $0.94~~0$\\  \cline{2-9}
&$10000$ & $\sim 90000$& $0.81~~~0$& $~0.90~~4\times10^{-6}$ & $0.96~~~0 $& $0.82~~0$ & $0.79~~~0$& $0.98~~0$ \\ 
\hline
\end{tabular}\label{T2}
\end{flushleft}
\end{table}

\begin{table}
\small
\begin{flushleft}
\caption{Accuracy measurements of different methods (The left number in the columns related to each method denotes TPR and the right one shows FPR. (sparsity $20\%$))}
\begin{tabular}{ |c|c|c|c|c|c|c|c|c| }
\hline
Dataset & Size$(p)$  & Sparsity & Glasso & QUIC & CovApprox & Noncvx-CGGM & WISE & Proposed\\
\hline
\multirow{3}{3em}{Chain}&$1000$ & $\sim 6000$ & $1~~~0$&  $1~~~0$& $1~~~0$ & $0.994~~~0$ & $1~~~0$ & $1~~~0$\\ \cline{2-9}
&$4000$ &$\sim 24000$ & $1~~~0$ &$1~~~0$&$1~~~0$ & $1~~~0$ & $1~~~0$ & $1~~~0$ \\ \cline{2-9}
& $10000$ & $\sim 60000$ & $1~~~0$& $1~~~0$ & $1~~~0$ & $1~~~0$ & $1~~~0$ & $1~~~0$\\ 
\hline
\multirow{3}{4em}{Random} &$1000$ & $\sim 20000$& $0.31~~~0$&$0.45 ~~~ 0$ & $0.62~~0$ & $0.41~~0$ & $0.32~~~ 0$& $0.65~~0$\\  \cline{2-9}
& $4000$ & $\sim 82000$ & $0.44~~~0$ & $0.60 ~~ 0$ & $ 0.76~~0$ & $0.48~~~0$ & $0.41~~~0$ & $0.78~~0$\\  \cline{2-9}
&$10000$ & $\sim 180000$& $0.52~~~0$& $~0.74~~0$ & $0.84~~~0 $& $0.56~~0$ & $0.44~~~0$& $0.85~~0$ \\ 
\hline
\end{tabular}\label{T4}
\end{flushleft}
\end{table}

\begin{table}
\small
\begin{center}
\caption{Time comparison of different methods vs. dimension variation (in secs)}
\begin{tabular}{ |c|c|c|c|c|c|c|c|c| }
\hline
Dataset & Size$(p)$  & Glasso & QUIC & CovApprox & Noncvx-CGGM & WISE & Proposed\\
\hline
\multirow{4}{3em}{Chain}&$100$ & $0.03$&  $0.002$& $0.003$ & $0.01$ & $0.001$ & $0.005$\\ \cline{2-8}
&$1000$ & $28$ &$1.9$&$2.9$ & $9$ & $2$ & $5$ \\ \cline{2-8}
&$4000$ & $252$ &$18$&$27$ & $80$ & $20$ & $43$ \\ \cline{2-8}
& $10000$ & $3.1E4$& $2.0E3$ & $3.1E3$ & $1.0E4$ & $2E3$ & $5.2E3$\\ 
\hline
\end{tabular}\label{T5}
\end{center}
\end{table}
In Table \ref{T2}, we compare our method to the recent state-of-the-art methods as well as glasso and the QUIC method. The comparison criteria are the True Positive Rate (TPR) and False Positive Rate (FPR) as defined in \cite{NIPS2011_4266}. 
The methods taken into account are \textbf{Glasso}, \textbf{QUIC}, \textbf{CovApprox}, \textbf{Noncvx-CGGM}, \textbf{WISE}, \textbf{SICE-EDAT} introduced in \cite{friedman2008sparse}, \cite{NIPS2011_4266}, \cite{siden2018efficient}, \cite{chen2018covariate}, \cite{nguyen2018distributionally}, and our current paper, respectively. 
We briefly summarize the data recorded in Table \ref{T2}. We can see that our method noticeably yields higher TPR in comparison to other methods as can be tracked in Table \ref{T2}.
It can be observed that the accuracy of our method outperforms other stat-of-the-art methods in Tables \ref{T2}\ref{T3}\ref{T4}. We emphasize again that the objective of this paper is to highlight the enhanced accuracy of our proposed method specifically for scenarios where the model is not too sparse. However, we have also provided Table \ref{T5} to show that our method computational complexity is comparable to other methods (although not the best). Our method is based on iterative matrix inversions. Depending on the case under study, the inversion could be done handled using matrix inversion lemma to reduce the complexity. 
\section{Conclusion}
In this paper, we have introduced a novel Sparse Inverse Covariance Estimation method based on exponentially decreasing threshold levels and updates in a transform domain to enhance accuracy in precision estimation. The exponentially decreasing threshold approach is derived from the IMAT method with strong recovery accuracy guarantees in sparse signal processing. Our proposed algorithm is also backed up with convergence analysis. The simulation results are provided to illustrate superior accuracy of our method compared to those of state-of-the-art methods in terms of True Positive Rate (TPR) and False Positive Rate (FPR) on different synthetic datasets.
\section*{References}
\bibliographystyle{elsarticle-num} 
\bibliography{reference.bib} 
\end{document}